\newtheorem*{note}{Note}
\providecommand{\norm}[1]{\bigl\lVert#1\bigl\rVert}
\title[Ensembling Neural Networks for Improved Prediction and Privacy]{Ensembling Neural Networks for Improved Prediction and Privacy in Early Diagnosis of Sepsis}
 \author{\Name{Shigehiko Schamoni} \Email{schamoni@cl.uni-heidelberg.de}\\
  \Name{Michael Hagmann} \Email{hagmann@cl.uni-heidelberg.de}\\
  \Name{Stefan Riezler} \Email{riezler@cl.uni-heidelberg.de}\\
  \addr Department of Computational Linguistics and\\ Interdisciplinary Center for Scientific Computing (IWR)\\ Heidelberg University, Germany}
\date{July 2022}
 \let\Ginclude@graphics\@org@Ginclude@graphics 
\begin{document}

\maketitle

\begin{abstract}
 Ensembling neural networks is a long-standing technique for improving the generalization error of neural networks by combining networks with orthogonal properties via a committee decision. We show that this technique is an ideal fit for machine learning on medical data: First, ensembles are amenable to parallel and asynchronous learning, thus enabling efficient training of patient-specific component neural networks. Second, building on the idea of minimizing generalization error by selecting uncorrelated patient-specific networks, we show that one can build an ensemble of a few selected patient-specific models that outperforms a single model trained on much larger pooled datasets. Third, the non-iterative ensemble combination step is an optimal low-dimensional entry point to apply output perturbation to guarantee the privacy of the patient-specific networks. We exemplify our framework of differentially private ensembles on the task of early prediction of sepsis, using real-life intensive care unit data labeled by clinical experts.
\end{abstract}

\section{Introduction}

Ensembling describes a family of algorithms that train multiple learners to solve the same problem, and exploit their heterogeneous properties to perform a committee-based prediction that achieves higher accuracy than any single component learner. These techniques are well-tried in machine learning practice and have led to theoretically well-founded algorithms such as stacking \citep{Wolpert:92}, boosting \citep{FreundSchapire:95}, or bagging \citep{Breiman:96}. Research on ensembling has very early tackled the problem of reducing variance of neural networks while keeping bias low at the same time. In the wide spectrum of approaches, ranging from sophisticated techniques to jointly train component networks \citep{LiuYao:99,BuschjaegerETAL:20} to building ensembles from model parameters of a single training trajectory \citep{HuangETAL:17,IzmailovETAL:18}, we are specifically interested in approaches where component models are trained independently and then smartly combined.  

A key insight in this area, first formulated in \cite{PerroneCooper:92}, is that the generalization error of the weighted average of predictions of individual component networks can be formalized as the weighted correlation between the component neural networks participating in the ensemble. This formulation opens several possibilities for efficient and effective machine learning: First, the bulk of the machine learning cost, namely the cost of training individual component networks, can be trivially parallelized or even be done asynchronously, thus providing an efficient way of enhancing the representational power of the ensemble by training multiple classifiers at once. Second, optimizing combination weights to minimize the weighted correlation between component networks provides a direct avenue to minimize the generalization error of the ensemble, or to build a sparse ensemble from the optimal subset of component networks with small error and small correlation with other component networks. 

A further advantage of weighted-averaging ensembles that has been investigated much less than their generalization performance is the possibility to seamlessly integrate privacy protection into machine learning.  
In the case of machine learning models trained on medical data, the privacy to be protected might concern the membership of patient-specific data in the training data for a particular disease. As argued by \cite{DinurNissim:03}, removal of ``identifying'' attributes such as patients' names is not enough, but instead random perturbations have to be applied to the outputs in order to protect privacy even in the simplest case of ``statistical'' queries such as averages over databases. The framework of differential privacy \citep{DworkRoth:14} allows giving strong guarantees on the information derivable from private training data when querying a machine learning algorithm.
We show that weighted-averaging ensembles do possess small sensitivity by tightly bounded output ranges and do not accumulate privacy budget via iterative training, thus they are ideally suited for privacy protection at small noise scales. Furthermore, we prove that uniform weights are optimal to protect privacy in a weighted-averaging ensemble. 

Specifying guarantees on privacy protection is of increasing importance for medical research. 
National laws and regulations such as the US HIPAA Privacy rule\footnote{\url{www.hhs.gov/hipaa/for-professionals/privacy/} (accessed 07/06/2022)} require measures to protect the privacy of health information. 
On the hospital level, protecting a patient's privacy is crucial especially when information is shared across institutions. Our method demonstrates the benefit of \emph{output sharing} where hospitals keep their in-house model in a secured area and only share the output with other institutions, thus avoiding the challenges and difficulties of \emph{model sharing} techniques such as federated learning \citep{RiekeETAL20}. 
On the patient level, a recent survey has shown that more than 30\% of the participants are comfortable with sharing their electronic health data for personalized healthcare, while less than 5\% are very uncomfortable with sharing \citep{GarettYoung22}. This means more than 60\% do not have a strong opinion on this topic, thus we hope that an increasing number of people will share their data if stronger privacy guarantees can be given.

\subsection*{Generalizable Insights about Machine Learning in the Context of Healthcare}

Expert labels and neural networks are a powerful combination for early sepsis prediction. However, patient data for this  task is scarce as expert labels are difficult to obtain, while the protection of privacy is crucial to encourage patients to contribute with their personal private data.
We show how to train individual personalized models and how to combine a small number of patient models in an ensemble that has more desirable properties in the field of medical data analysis than a standard full model, i.e., a single model that is trained on all available patient data.
\begin{itemize}
    \item We present theoretical results that an ensemble of models which was trained on a fraction of the available data can be better than a full model, and we verify this empirically.
    \item Our training method not only exposes fewer patients in the predictor than a full model, but also protects the privacy better: we apply a strong membership attack and show that the ensemble successfully prevents privacy leakage.
    \item We show that an ensemble of several models is favorable to a single model due to its reduced sensitivity in theory, and we experimentally verify that our ensemble maintains its accuracy at privacy budgets almost two orders of magnitude smaller than a full model.
\end{itemize}
Furthermore, our ensemble can be easily updated by model-growing without the need of retraining the whole system when new patient's data becomes available.

\section{Related Work}

Ensembles of neural networks have been researched at least since \cite{HansenSalamon:90}, and are now a standard tool of deep learning. The spectrum of approaches ranges from joint training of component models under ensemble objectives such as negative correlation learning \citep{LiuYao:99,BuschjaegerETAL:20} to approaches to efficiently build ensembles by combining snapshots of model parameters along the training trajectory of a single network by averaging in model space \citep{HuangETAL:17}, or weight space \citep{IzmailovETAL:18}. Even well-known staples such as dropout can be seen as ensembles of subnetworks \citep{SrivastavaETAL:14}. For a recent overview over ensemble deep learning, see \cite{GanaieETAL:21}.

Traditionally, ensemble methods are often used in medical data science. Recent examples can be found in the area of early prediction of sepsis: \cite{GohETAL:21} use a voting ensemble of a logistic regression model and a random forest trained on same data; \cite{MoorETAL:21} use a max-score ensemble of four machine learning models trained on four different datasets. The privacy preserving aspects of ensemble methods in health care, however, have only been investigated recently. \cite{FritchmanETAL18} describe a framework for privacy preserving inference using cryptographic protocols. \cite{AdamsETAL22} demonstrate secure training of ensembles in a multi-party computation scenario. Both works are based on decision tree ensembles, while our ensemble strategy can be combined with any machine learning model.

Differential privacy has become a de-facto standard for privacy protection at least since \cite{Dwork:06}. This framework has been applied to the case of  privacy protection in machine learning where the goal is to protect the privacy of training data when publicly releasing a machine learning model.
To our knowledge, despite their natural fit to protect privacy in the combination of patient-specific models, differentially private ensembles have not yet been widely used in medical data science. Instead, the paradigm of cryptography still seems to be going strong in the area of collaborative learning on health data \citep{GongETAL:15}.

\section{Methods}

In this section, we first describe the basic machine learning model that is used throughout our experiments. We then explain how we combine multiple models in an ensemble that improves prediction accuracy, and we also show how ensembles can enhance differential privacy. Based on the theoretical results, we finally design an algorithm for greedy ensemble growing. 

\subsection{Recurrent Neural Networks over Time Series}

Our basic machine learning models are recurrent neural network (RNN) architectures that predict a severity score $y_t$ for each time step $t$ (see Figure \ref{fig:rnnarchitecture}). We use discretized 30 minute steps as input for our models and predict a sepsis-related score. The predicted score at each time step is an expert label ranging from 0 to 4. 
The motivation for using an RNN-based system is the intuition that a recurrent network is able to model a dynamic system over time \citep{DurstewitzETAL:18} (i.e., the development of the patient) with its feedback loop connections. 
We implement a special form of RNNs, namely LSTMs \citep{HochreiterSchmidhuber:97}, due to their ability to model both short and long term dependencies in time series. However, our ensemble growing strategy can easily be adapted to other sequence-to-sequence models such as Transformers \citep{VaswaniETAL17}, or even to time-agnostic models such as feed-forward nets or decision trees. 
Details on the architecture and meta-parameters of our model are given in Appendix \ref{app1}.

\begin{figure}[t]
	\centering
	\includegraphics[width=4cm]{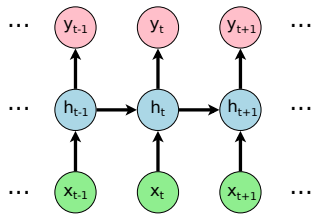}
    \caption{Schematic view on the RNN architecture. For each input $\mathbf{x}$, the RNN is in a hidden state $\mathbf{h}$ that is also conditioned on the previous hidden state and generates an output $y$. 
    }
    \label{fig:rnnarchitecture}
\end{figure}

We train two different types of models with a regression objective. First, a  \emph{full model} trained on all available data, and secondly, an \emph{ensemble} of individual models that are each trained on the data of a single patient. 
The training data for the full model consist of 638 or 637 patient's timelines depending on the data fold (see Section \ref{sec:datacohort}). It is trained for a maximum of 200 epochs with an early stopping criterion on the validation set. We originally optimized the general model architecture on this model type and use the same architecture throughout our experiments. 

\subsection{Weighted Averaging Ensemble}
\label{sec:weightedaverageensemble}

In our setup, we are combining models that were trained on individual patients, which makes each model an ``expert'' \citep{JacobsETAL91} of sepsis prediction for a specific patient. We then combine selected prototypical patient models by weighting their predictions, motivated by the insight that a physician uses past experience to make future decisions. Such a weighted-average ensemble has very useful properties. Specifically, the ensemble's mean squared error (MSE) can be decomposed as a linear combination of the MSE and the covariances of the errors of the individual components models. Furthermore, a closed-form solution for optimal weights is completely determined by the variance-covariance matrix of the model's errors on a held-out set. 
Similar results have been shown by \cite{PerroneCooper:92} for the case of cross-validation, a setup that is usually applied to prevent overfitting of hyperparameters. We use cross-validation in our experiments particularly due to the fact that our medical data is small and thus our learned models easily overfit and hence the test-set performance shows high variance. \cite{ZhouETAL:02} also exploit this decomposition of the MSE to improve generalization when selecting models and determining weights by using a genetic algorithm as a heuristic. 

\begin{definition}
Let $f,\hat{f}_i: \mathbb{R}^m\rightarrow \mathbb{R}, i=1, \ldots, N$. Further let $\mathfrak{F}=\{\hat{f}_i(x) := \widehat{\mathbb{E}[y|x]}_i\}$ be a set of regression estimates of a regression function $f(x) := \mathbb{E}[y|x]$ and $w_i\in \mathbb{R}$ subject to $\sum_{i=1}^N w_i = 1$. Then
\begin{align*}
    \hat{f}_{em}(x) := \sum_{i=1}^N w_i \hat{f}_i(x)  
\end{align*}
is called the \emph{weighted-average ensemble estimator}.
\end{definition}

When comparing two models' predictions to decide which is best, it is useful to define a measure of the length of the misfit vector $\mathbf{m}$. Each element in this vector is given by the error of the model's prediction $f_i$ for input $x$, i.e., the misfit 
$m_i(x) := f_i(x)-f(x)$ of function $f_i(x)$ with respect to the true value $f(x)$. A norm defined on the vector data space then returns the length of the misfit vector, which is in its simplest form the inner product $M = \mathbf{m}^{\intercal}\mathbf{m}$ or the $\ell_2$-norm of $\mathbf{m}$. Conveniently, the length of the misfit vector is equal to the squared error of $f_i$ in this case. A comparison of the $\ell_2$-norm of the misfits of two models is thus a comparison of the squared error, and the resulted ranking of models is equivalent to a ranking w.r.t. their MSE. 

To measure how different two models are, it is again useful to compare two models' predictions $f_i(x)$ and $f_j(x)$ by looking at the misfits of both functions and calculate their covariance $\sigma^2_{ij} = \frac{1}{N} \mathbf{m}_i^{\intercal}\mathbf{m}_j$. The covariance of functions that generate more similar predictions is above 0, of functions that generate more opposite predictions below 0, and of functions that generate orthogonal predictions it is  0. We are specifically interested in the last case, as a system of orthogonal functions improves generalization if combined with a suitable ensemble growing strategy \citep{PerroneCooper:92,ZhouETAL:02}. 

Summarizing, the covariance of the errors of a set of models can be represented by a quadratic matrix $\mathbf{C}$ where each element is given by the covariance of the prediction errors of two individual models or by the mean squared error of a single model on the diagonal:
\begin{align*}
    \text{C}_{ij} := \mathbb{E}_X[(\hat{f}_i(X)-f(X))(\hat{f}_j(X)-f(X))] = \int (\hat{f}_i(X)-f(X))(\hat{f}_j(X)-f(X)) d\text{P}_X
\end{align*}
is symmetric, $C_{ij} = C_{ji}$, and
\begin{align*}
    \text{MSE}(\hat{f}_i) := \mathbb{E}_X[(\hat{f}_i(X)-f(X))^2] = \textrm{C}_{ii} 
\end{align*}
The following theorem shows that the MSE of the ensemble estimator can be expressed in terms of the $C_{ij}$.

\begin{lemma}
Let $\hat{f}_{em}(x)$ be the ensemble estimator constructed from $\mathfrak{F}=\{\hat{f}_i : i=1,\ldots,N\}$ and $\mathbf{C}=[C_{ij}]_{i,j = 1,\ldots,N}$ be the covariance matrix of $\mathfrak{F}$. Then 
\begin{align*}
    \text{MSE}(\hat{f}_{em}(x)) & = \mathbf{w}^\intercal \mathbf{C} \mathbf{w} \\\notag
    & = \sum_{i=1}^N \sum_{j=1}^N w_i w_j C_{ij} \\\notag
    & = \sum_{i=1}^N w_i^2 C_{ii} + 2 \sum_{i=2}^{N} \sum_{j=1}^{i-1} w_i w_j C_{ij} \\\notag
    & = \sum_{i=1}^N w_i^2  \text{MSE}(\hat{f}_i) + 2 \sum_{i=2}^{N} \sum_{j=1}^{i-1} w_i w_j C_{ij}  
\end{align*}
for all $\mathbf{w} = [w_1, \ldots, w_N]^\intercal \in \mathbb{R}^N$.
\end{lemma}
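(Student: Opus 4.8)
The plan is to exploit the affine constraint $\sum_{i=1}^N w_i = 1$ to rewrite the ensemble's error as a weighted sum of the individual component misfits, after which the quadratic form emerges directly by expanding a square and passing the expectation through a finite sum. The whole argument is essentially one substitution followed by bookkeeping.

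First I would subtract $f(X)$ inside the expectation defining $\text{MSE}(\hat{f}_{em})$ and use the constraint to write $f(X) = \left(\sum_{i=1}^N w_i\right) f(X)$, so that
\[
\hat{f}_{em}(X) - f(X) = \sum_{i=1}^N w_i \hat{f}_i(X) - \sum_{i=1}^N w_i f(X) = \sum_{i=1}^N w_i \big(\hat{f}_i(X) - f(X)\big) = \sum_{i=1}^N w_i\, m_i(X),
\]
where $m_i(X) := \hat{f}_i(X) - f(X)$ is the misfit introduced earlier. This is the single place where the normalization $\sum_i w_i = 1$ is essential: it converts the ensemble misfit into a linear combination of the component misfits, which is exactly what lets the covariance matrix appear.

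Next I would square this expression and expand the square as a double sum, $\big(\sum_i w_i m_i(X)\big)^2 = \sum_{i=1}^N \sum_{j=1}^N w_i w_j\, m_i(X) m_j(X)$. Since the sum is finite and the weights are constants, $\mathbb{E}_X$ passes through the summation by linearity, and by the definition $C_{ij} = \mathbb{E}_X[m_i(X) m_j(X)]$ each term becomes $w_i w_j C_{ij}$. This yields $\text{MSE}(\hat{f}_{em}) = \sum_{i,j} w_i w_j C_{ij} = \mathbf{w}^\intercal \mathbf{C}\mathbf{w}$, establishing the first two displayed lines of the lemma.

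Finally, the remaining equalities are purely algebraic. I would split the double sum into its diagonal part $\sum_i w_i^2 C_{ii}$ and its off-diagonal part, then invoke the symmetry $C_{ij} = C_{ji}$ already noted in the excerpt to pair the $(i,j)$ and $(j,i)$ contributions, collapsing them into $2\sum_{i=2}^N \sum_{j=1}^{i-1} w_i w_j C_{ij}$; substituting $C_{ii} = \text{MSE}(\hat{f}_i)$ gives the last line. The hard part here is not computational at all — the only conceptual step is recognizing that the constraint must be invoked \emph{before} expanding the square, so that $f$ can be absorbed into the weighted average; everything downstream is routine index manipulation.
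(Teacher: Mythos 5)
Your proof is correct and is exactly the intended argument: the paper states this lemma without proof (deferring to \cite{PerroneCooper:92}), and the standard derivation is precisely your route of using $\sum_i w_i = 1$ to write $\hat{f}_{em}(X)-f(X)=\sum_i w_i m_i(X)$, expanding the square, passing $\mathbb{E}_X$ through the finite sum, and then splitting the quadratic form by symmetry of $\mathbf{C}$. You also correctly identify the one subtle point, namely that the normalization constraint (which is part of the ensemble's definition, even though the lemma's closing quantifier ``for all $\mathbf{w}\in\mathbb{R}^N$'' omits it) must be invoked before expanding, since the identity fails for unnormalized weights.
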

This fact also facilitates an optimality result with respect to the weights that is given in Appendix \ref{app2}. An important consequence of this result is the fact that if the weights are chosen in an optimal way, then $\text{MSE}(\hat{f}_{em})  \leq \min_{j=1,\ldots,N} \text{MSE}(\hat{f}_{j})$.

\subsection{Differentially Private Ensembling}
\label{sec:dpensembling}

Differential privacy for machine learning is commonly conceptualized as protecting the privacy of training data by randomized optimization algorithms that output learned weights. For example, during stochastic gradient descent (SGD) training of machine learning models, privacy-preserving noise can be added to the trained weights, to the learning objective, or the gradient-based weight updates \citep{JayaramanEvans:19}. While convex optimization algorithms like logistic regression allow for output perturbation or objective perturbation \citep{ChaudhuriETAL:11}, non-convex optimization for deep neural networks requires iterative gradient perturbation in addition to gradient clipping \citep{ShokriShmatikov:15,AbadiETAL:16,McMahanETAL:16}. 
Although these strategies provide privacy guarantees for model sharing in theory, there exist several disadvantages in practice:
\begin{enumerate}
     \item The required privacy budget $\epsilon$ for deep learning can be proportional to the size of the target deep learning model, leading to $\epsilon$ values in the order of hundreds of thousands or more. \cite{JayaramanEvans:19} show that the large privacy budgets that are required in practice in non-convex optimization severely undermine the value of privacy guarantees provided by differential privacy. 
    \item Bounding the gradient norm in order to restrict the sensitivity of SGD training for deep neural networks is limited by a tradeoff between privacy protection and network performance, where training speed and prediction accuracy are lowered if the bound on the gradient norm is too tight.
    \item Iterative learning procedures such as SGD do not scale to large numbers of training iterations due to the fundamental composition theorem of \cite{DworkRoth:14} that causes the required privacy budget to accumulate across iterations.
\end{enumerate}

In this work, we take an approach where models are shared by ensembling, and only the final predictions are made public. This allows circumventing most of the above mentioned problems.
\begin{enumerate}
    \item Ensembling allows privacy protection by output perturbation via a Laplace mechanism. This mechanism is independent of the size of the component networks.
    \item In most application cases, outputs of component networks that contribute to an ensemble are either naturally bounded in a certain range or can be thresholded without loss of generality. 
    \item Ensembling is a one-step process that is applied to the outputs of component networks. It is thus not affected by an accumulation of required privacy budgets across training iterations.
\end{enumerate}

The basic mathematical details of differential privacy are described in Appendix \ref{app3}. 
For our work, we especially build upon work on differentially private mean estimators \citep{EpastoETAL:20,BunSteinke:19}
and formalize a privacy-protected weighted averaging algorithm for ensembling as follows. 
Let $\hat{f}_i: \mathcal{D} \rightarrow \mathbb{R}, i=1, \ldots, N$ be functions approximated by $N$ component neural networks. Furthermore, assume bounded outputs $f_i \in [0,B], i=1, \ldots, N$. Lastly, assume an ensembling technique that combines components by weighted averaging, with weights $w_i \in \mathbb{R}_{\geq 0}, i=1, \ldots, N$ and $\sum_{i=1}^N w_i = 1$. Then Algorithm \ref{alg:privateWAE} protects the privacy of an ensemble of $N$ neural networks simply by applying the Laplace mechanism for output perturbation in the averaging phase:

\begin{algorithm2e}
\caption{Private Weighted Averaging Ensemble}\label{alg:privateWAE}

\DontPrintSemicolon
\KwIn{Outputs of component networks $\hat{f}_1, \ldots, \hat{f}_N$, combination weights $w_1, \ldots, w_N$}
    $\hat{f} \leftarrow w_1 \hat{f}_1 + \cdots + w_N \hat{f}_N$ \;
    $\tilde{f} \leftarrow \hat{f} + \mathrm{Lap}\left( \frac{B \cdot \max_{i=1,\ldots,N} w_i}{\epsilon} \right)$ \;
    \KwRet{ $\tilde{f}$ }\;
\end{algorithm2e}

\begin{lemma} For every set of component networks $f_i$ and weights $w_i$, $i=1,\ldots,N$, Algorithm \ref{alg:privateWAE} is $(\epsilon,0)$-differentially private.
\end{lemma}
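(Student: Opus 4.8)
The plan is to reduce the claim to the standard privacy guarantee of the Laplace mechanism recalled in Appendix~\ref{app3}, which states that releasing $g(D) + \mathrm{Lap}(\Delta/\epsilon)$ is $(\epsilon,0)$-differentially private whenever $\Delta$ upper-bounds the $\ell_1$-sensitivity $\max_{D \sim D'} |g(D) - g(D')|$ of the query $g$ over adjacent databases. Here the query is the weighted average $g(D) = \sum_{i=1}^N w_i \hat{f}_i$, and the noise scale in Algorithm~\ref{alg:privateWAE} is $B \cdot \max_{i} w_i / \epsilon$. So the entire proof amounts to verifying that $B \cdot \max_{i} w_i$ is a valid sensitivity bound for $g$, after which the conclusion is immediate.

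First I would fix the adjacency relation: two input databases are neighbors when they differ in the data of a single patient. Since each component network $\hat{f}_i$ is trained only on the data of patient $i$, changing one patient's record perturbs exactly one of the outputs $\hat{f}_1,\ldots,\hat{f}_N$, say $\hat{f}_j$, while leaving the remaining $N-1$ outputs unchanged. This isolation of a single summand is the crux of the sensitivity computation.

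Next I would bound the resulting change in the query. Because each output is assumed bounded, $\hat{f}_j \in [0,B]$, the largest possible change in $\hat{f}_j$ between neighboring databases is $B$. By linearity of the weighted sum, the change in $g$ is then at most $w_j \cdot B$, and maximizing over which patient is altered gives $|g(D) - g(D')| \le B \cdot \max_{i=1,\ldots,N} w_i$ for all neighbors $D \sim D'$. This is precisely the constant fed into the Laplace scale, so the mechanism attains $(\epsilon,0)$-differential privacy with no slack, and the bound holds uniformly for every admissible choice of networks and weights, which is what the lemma asserts.

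The main obstacle is conceptual rather than computational: one must argue carefully that the per-patient adjacency relation isolates a single component output, so that no cross-terms in the weighted sum contribute, and that it is the non-negativity constraint $w_i \ge 0$ together with $\sum_i w_i = 1$ which makes $\max_i w_i$ (rather than $\sum_i w_i = 1$) the correct multiplier. It is worth flagging that this same observation foreshadows the optimality of uniform weights for privacy, since $\max_i w_i$ is minimized at $1/N$ exactly when the weights are uniform. Once the sensitivity is pinned to $B \cdot \max_i w_i$, invoking the Laplace mechanism closes the argument.
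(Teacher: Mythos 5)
Your proposal is correct and follows essentially the same route as the paper: identify the $\ell_1$-sensitivity of the weighted average as $B\cdot\max_i w_i$ and invoke the standard Laplace-mechanism guarantee. The only difference is that you actually derive the sensitivity bound (via the per-patient adjacency relation isolating a single bounded summand), whereas the paper simply asserts it; your added detail is a strengthening, not a deviation.
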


\begin{proof}
The $\ell_1$-sensitivity $\Delta \hat{f}$ of the weighted averaging function $\hat{f}$ is $B \cdot \max_{i=1,\ldots,N} w_i$. This allows applying a Laplace mechanism to construct a randomized algorithm $\mathcal{A} = \tilde{f}$ with noise drawn from $\mathrm{Lap}(\Delta \hat{f}/\epsilon)$. By Theorem 3.6 of \cite{DworkRoth:14}, we know that the Laplace mechanism preserves $(\epsilon,0)$-differential privacy. 
\end{proof}

As can be seen from the use of a Laplace mechanism, sensitivity of the ensemble output is minimized by choosing uniform weights $w_i = 1/N$. Uniform weights effectively reduce the $\ell_1$-sensitivity $\Delta \hat{f}$ of an ensemble $\hat{f}$ by a factor of $1/N$ compared to single models ($w_i = 1$), including the full model that is trained on all available training data.
Furthermore, these weights guarantee privacy protection at perturbation with minimal variance. This theoretical advantage of ensembles in privacy protection is confirmed in the experiments presented in Section \ref{sec:experiments}.

\subsection{Algorithm}
\label{sec:algorithm}

The method implemented in our work can be characterized as a ``bucket of models'' where we select a pool of individual learners that performed best on a validation set. The idea is to identify prototypical models that represent certain types of patients whose properties can be transferred well to other patients.
Our bucket of models consists of a number of individual models that were selected based on the criterion defined by \cite{PerroneCooper:92}. 
By looking at the \textit{misfit} of function $\hat{f}_i$, which is the deviation from the true solution, $m_i := f(x)-\hat{f}_i(x)$, 
the algorithm adds a new candidate model $f_{\text{new}}$ to the ensemble $\hat{f}_{em}$ if the candidate satisfies the following inequality: 
\begin{equation}
\label{eq:growingensemble}
(2N+1)\text{MSE}[\hat{f}_{em}] > 2 \sum_{i \neq \text{new}} \mathbb{E}[m_{\text{new}} m_i] + \mathbb{E}[m_{\text{new}}^2]
\end{equation}
The left part of the RHS's sum expresses that the candidate has to be reasonably different to already included models while the right part of the sum makes sure that the candidate has low error on the validation set, hence the inclusion of $\hat{f}_{new}$ improves generalization and reduces error of the ensemble. 
The total number of models in the ensemble is not fixed and depends on the performance of the trained models, their diversity, and the validation set.

\begin{algorithm2e}
\caption{Greedy Ensemble Growing}
\label{alg:greedyensemble}
\DontPrintSemicolon
\KwIn{List of patient models $P$ sorted non-decreasing by MSE}
\KwOut{Final ensemble $\hat{f}_{em}$}

  \SetKwFunction{FnextModel}{nextModel}

$\hat{f}_{em}\leftarrow \text{initialize ensemble}$\;
\Repeat{$\hat{f}_{em}$ stops growing}{
  $f_{new} \leftarrow \FnextModel( \hat{f}_{em}, P )$\;
  \If{$f_{new}$ is found}{
    $\hat{f}_{em}\leftarrow \hat{f}_{em} + f_{new}$\;
  }
}
\vspace{1em}
  \SetKwProg{Pn}{Function}{:}{\KwRet {not found}}
  \Pn{\FnextModel{$\hat{f}_{em}$, $P$}}{
    \For{$f_{new}$ in $P$}{
      \If {$(2N+1)\text{MSE}[\hat{f}_{em}] > 2 \sum_{i \neq \text{new}} \mathbb{E}[m_{\text{new}} m_i] + \mathbb{E}[m_{\text{new}}^2]$}{
        $P \leftarrow P - f_{new}$ \tcp*{remove $f_{new}$ from $P$}
        \KwRet $f_{new}$\;
      }
    }
  }

\end{algorithm2e}
Our final algorithm is listed in Algorithm \ref{alg:greedyensemble}. This is a greedy algorithm which adds a new model in each step until Inequality \ref{eq:growingensemble} cannot be satisfied by any remaining model. It should be noted that a greedy algorithm does not guarantee to return the best performing ensemble, which can only be determined by an exhaustive search over all $2^N - 1$ model combinations. 

Our pool of models contains sepsis and non-sepsis patients' models, however, non-sepsis models are often more similar to each other and have a low error, because they usually predict a severity score between 0 and 1 and the number of non-sepsis patients exceeds the number of sepsis patients by more than a factor of 3 (see Table \ref{tab:data_splits}). At the same time, our main objective is sepsis prediction, thus we prioritize the selection of sepsis models in function \texttt{nextModel} by first going through the list of sepsis models and only if no suitable sepsis model is found, we then go through the list of non-sepsis models. We omitted this prioritization strategy in Algorithm \ref{alg:greedyensemble} for reasons of clarity.

We also compare two different weighting schemes for our ensemble. In the uniformly weighted case, the weights $w_i$ are always set to $\frac{1}{N}$:

\begin{equation}
\label{eq:ensemble}
y_T = \sum_{i=1}^N{w_i \cdot y^{(i)}}
\end{equation}
The prediction $y$ at time point $T$ is simply the arithmetic mean of all predictions of the $M$ individual models. As demonstrated in Section \ref{sec:dpensembling}, this weighting scheme delivers the best tradeoff between privacy and accuracy by employing privacy protection with minimal variance.

We additionally employ a more sophisticated method for combining learners that uses a weighting scheme based on the model predictions and previous expert labels. Here, the weights of each individual model is determined by the accuracy at which the model was able to predict the patient's previous labels. This method is connected to the \emph{mixture of experts} strategy \citep{JacobsETAL91}, but instead of using a gating mechanism for selecting the best experts we apply a soft weighting scheme to get the optimal combination. Here, the weights $w_i$ for Equation \ref{eq:ensemble} are calculated using the following expression:
\begin{equation*}
w_i^{(T)} = \frac{1}{C}\sum_{t=1}^{T-1}{ \frac{1}{1 + |y_t^{(i)} - \hat{y}_t|^2} }
\end{equation*}
In words, the weight $w_i^{(T)}$ reflects the accuracy by which model $i$ predicted the label in previous time steps ($1$ to $T-1$). The value $1/C$ is a normalization factor such that $\sum{w_i^{(T)}} = 1$. The idea of this weighting scheme is motivated by our label collecting method: On each day, the senior physicians assign labels to each patient in the intensive care unit (ICU). Thus, a theoretical online-learning algorithm has access to previous expert labels and this information can be exploited to tune weights without retraining the model. 

The code for training, inference, and evaluation of the sepsis prediction model as well as the implementation of the ensemble growing strategy described in Algorithm \ref{alg:greedyensemble} is available on \texttt{github}.\footnote{\url{https://github.com/StatNLP/sepens/}}

\section{Experiments}
\label{sec:experiments}

In this section, we define the patient cohort and how features and labels were obtained for the sepsis prediction task. We compare the fully trained model and the ensemble in terms of prediction accuracy in AUROC, and we empirically show the ensemble's insensitivity to privacy leakage during a membership attack, and evaluate its prediction accuracy with respect to a given privacy budget.

\subsection{Data Cohort}
\label{sec:datacohort}

Our data is based on a PDMS system running at the University Medial Centre in Mannheim, Germany (UMM). The UMM is a 1,352-bed tertiary care center operating a 22-bed interdisciplinary surgical ICU. The hospital is a center of the Acute Respiratory Distress Syndrome (ARDS) Network Germany. 
Timelines of clinical measurements 
were extracted from the Intellispace Critical Care and Anesthesia (ICCA) system by Philips (Eindhoven, Netherlands). Additional demographic patient data as well as ICU admission and discharge times were extracted from a HIS system by SAP (Walldorf, Germany). 

Timelines of 42 features were extracted from the ICCA system, and 1 demographic feature, namely age, was extracted from the HIS system. Other demographic features such as gender did not improve performance of our predictive models. See Table \ref{tab:features} in Appendix \ref{app4} for the list of features we use for training our models. 

At the beginning of an admission many clinical measurements are not available. Such measurements are set to standard default values defined by a clinical expert. To account for varying intervals of clinical measurements during hospital stay, we apply a carry-forward strategy where the most recent value is ``carried forward'' until a new value is available. 
Based on the time lines of varying intervals, we discretize the time lines into uniform steps of 30 minutes during the patient's ICU stay. All values are standardized by calculating $z$-scores, i.e. $z=\frac{x-\mu}{\sigma}$, where $\mu$ is the mean and $\sigma$ is the standard deviation of the population.

\subsubsection{Expert Labels}

Sepsis is a complex concept with a wide range of clinical symptoms. Established definitions aim to operationalize this concept by combining an suspected or existing infection with clinical conditions such as SIRS (Sepsis-1/2) or SOFA (Sepsis-3). 
These definitions are very important in clinical practice, however, they can introduce problems of circularity for machine learning models if the criteria defining a condition are used to predict the very same condition. The problem of circularity in machine learning has been discussed in a broader context in \cite{RiezlerHagmann22}, and for the specific problem of sepsis prediction in \cite{schamoniETAL19}.
We thus established a questionnaire that collects expert opinions on a daily basis \citep{LindnerSchamoniKirschningETAL22}. The questionnaire concerns several aspects in ICU practice and was developed in close cooperation with the senior physicians at the ICU. 
The main goal of the questionnaire is to capture expert opinions that are often based on complex clinical concepts and are thus not fully reflected by established operationalizations. 
On every day, we ask the senior intensivists to assign a current working diagnosis that is not based purely on clinical criteria, but on their experience and their own opinion. The working diagnoses are put on a 5-point scale where 0 stands for ``Neither SIRS nor Sepsis'', and 4 stands for ``Septic Shock''. See Table \ref{tab:explabels} for the complete list of working diagnoses.

\begin{table}[ht]
\centering
\caption{
List of possible expert labels for current working diagnoses.
}
\begin{tabular}{cl}

{\bf Value}&{\bf Working diagnosis}\\
\hline
0 & Neither SIRS nor Sepsis\\
1 & SIRS\\
2 & Sepsis\\
3 & Severe Sepsis\\
4 & Septic Shock\\
\hline

\end{tabular}
\label{tab:explabels}
\end{table}

The questionnaire is filled out on a daily basis at 2~p.m. At the beginning of the survey, the labels were assigned for the preceding 24h window, i.e., from 2~p.m. on the preceding day to 2~p.m. on the current day. Later in the survey, the senior intensivists were asked to set a 6h window of change if a given label differs from the previous label. These new 6h-intervals divide the 24~h window previously in use. To balance the error introduced by the difference of the true sepsis onset to the sepsis labeling time, we set the assumed time of sepsis onset to the center of the interval, e.g., to 2~a.m. for the 24h window, 5~a.m. for the 2~a.m.--8~a.m. window, 11~a.m. for the 8~a.m.--2~p.m. window, 5~p.m. for the 2~p.m.--8~p.m. window, and 11~p.m. for the 8~p.m.--2~a.m. window.

\subsubsection{Data Filtering and Splitting}

Our goal is to learn a model that is able to make timely and accurate predictions on a patient's sepsis outcome. As we are interested in prediction, we excluded on-admission sepsis cases where we defined an on-admission case as a patient who received the first sepsis expert label within the first 48h after ICU admission. We also removed patients that stayed less than 16h in the ICU. Both filtering steps reduced the total number of patients in our cohort from 1,961 to 1,275. To address the problem of having very different distributions in train, validation and test splits, we employed a sampling scheme where we first sort non-sepsis and sepsis patients by length of stay and sepsis onset, respectively, and then sample from groups of four consecutive patients to randomly assign them to four partitions, namely A, B, C, and D. The results of our sampling scheme are listed in Table \ref{tab:data_splits}.

\begin{table}[!ht]
\centering
\caption{
Distribution of hospital stay times and sepsis onset in hours for sepsis and non-sepsis patients across the four partitions.
}
\begin{tabular}{cccc}

& {\bf Patients} & \multicolumn{1}{c}{\bf Non-sepsis pat. stay [h]} & \multicolumn{1}{c}{\bf Sepsis pat. onset [h])}\\
{\bf Partition} & {\bf (Non-/Sepsis)} & {\bf Median/Min/Max} & {\bf Median/Min/Max} \\
\hline
{\bf A} & $319~(245/74)$ & $56.5/ 15.5/ 2253.0$ & $159.0/50.0/1394.0$\\
{\bf B} & $319~(245/74)$ & $55.5/ 15.5/ 1218.5$ & $162.0/49.0/972.5\phantom{0}$ \\
{\bf C} & $319~(245/74)$ & $56.0/ 15.5/ 1618.5$ & $161.0/51.0/1257.0$ \\
{\bf D} & $318~(244/74)$ & $56.0/ 15.5/ 831.5\phantom{0}$  & $161.5/48.5/1269.5$ \\
\hline
{\bf Total} & $1275 (979/296)$ & -- & -- \\

\end{tabular}
\label{tab:data_splits}
\end{table}

In our experiments, we applied a 4-fold cross validation scheme where we used two splits as train data, one as validation data, and the final one as test data. In detail, the train data consists of partitions A+B, B+C, C+D, D+A, and the validation and test sets are partitions C and D, D and A, A and B, and B and C, respectively. Table \ref{tab:data_splits} lists statistics of patients and hospital stay times in the partitions. The preprocessed data splits are available for download.\footnote{\url{https://www.cl.uni-heidelberg.de/statnlpgroup/sepsisexp/}} 

\subsubsection{Ethics}

Ethics approval was obtained from the Medical Ethics Commission II of the Medial Faculty Mannheim, Heidelberg University, Germany (reference number, 2016-800R-MA).

\subsection{Experimental Results}

In this section, we compare prediction accuracy and privacy of ensemble and full models. We show how various privacy budgets affect accuracy loss, and how ensembles successfully prevent privacy leakage in the context of a membership attack.

\subsubsection{Prediction Accuracy of Ensemble and Full Model}

In our first experiment, we compare a model that was trained on all available data to an ensemble grown using Algorithm \ref{alg:greedyensemble}. Table \ref{tab:ensemblesizesplit} lists the resulting ensemble sizes and the ratio of sepsis and non-sepsis patients in our final ensembles per split. While the ensembles are of size 40.5 on average, the numbers range from 20 for split-2 to 65 for split-3, which is 50\% less and 60\% more than the average, respectively. The ratio between non-sepsis and sepsis patient models shows a similar high variance ranging from 0.333 to 0.85 depending on the data split. This illustrates that although we tried to make the data splits as similar as possible, the number of patients is still small and individual patients can have a large influence on the composition of the final model. 

\begin{table}[ht]
\centering
\caption{Comparison of total data and resulting ensemble sizes using the growing strategy described in Algorithm \ref{alg:greedyensemble}. 
}
\label{tab:ensemblesizesplit}
\begin{tabular}{lccccc}

 & {\bf split-0} & {\bf split-1} & {\bf split-2} &  {\bf split-3}  &  {\bf average} \\
\hline
{\bf Total data} \\
\hline
\# train patients & $638$ & $638$ & $637$ & $637$ & -- \\ 
\# dev patients   & $319$ & $318$ & $319$ & $319$ & -- \\ 
\# test patients  & $318$ & $319$ & $319$ & $319$ & -- \\ 
\hline
{\bf Ensemble sizes}\\
\hline
\# total models & $37$ & $20$ & $65$ & $40$ & $40.50$ \\
\# non-sepsis models & $17$ & $5$ & $30$ & $15$ & $16.75$ \\
\# sepsis models & $20$ & $15$ & $35$ & $25$ & $23.75$ \\    
non-sepsis/sepsis ratio & $0.85$ & $0.33$ & $0.85$ & $0.60$ & $0.71$ \\
\end{tabular}
\end{table}

The metric to compare the predictive performance of our models is the area under the receiver operator characteristic curve (AUROC). The AUROC represents the curve of sensitivity-specificity pairs at particular decision thresholds. In the case of time series with prediction intervals, a common practice of calculating the AUROC is to consider sensitivity and specificity at all timesteps of interest, that is the interval itself and the preceding time of hospital stay. We follow the standard procedure of evaluating only the first sepsis episode, as subsequent episodes have very different properties due to interventions such as medication, administration of fluids, etc.
Significance levels were computed using a two-sample $t$-test over the means of the two populations. Means are calculated over four cross-validation runs. 

When comparing the predictive performance in terms of AUROC, the ensembles are remarkably better than the full model. For the uniform ensemble (\textit{ensemble-u}), the difference is significant according to a $t$-test at $p<0.05$ for all but one case, that is predicting sepsis 12h before onset using the uniform model (see Table \ref{tab:resultsensemblepnc}).
We attribute this to the growing strategy of Algorithm \ref{alg:greedyensemble} which guarantees to improve generalization based on theoretical results discussed in Section \ref{sec:weightedaverageensemble}. 
The ensemble with weights adjusted due to the history of prediction performance (\textit{ensemble-w}), we observe larger gains for the shorter prediction times. When moving further away from sepsis onset, the prediction performance of the uniform and the weighted ensemble becomes more similar. This might be influenced by the fact that the expert labels have limited time resolution (24h and 6h) such that the prediction interval is closer to the real onset time than the interval values indicate.

\begin{table}[ht]
\centering
\caption{
AUROC of the ensemble for predicting sepsis at different times before onset. Here, the ensemble is generated using Algorithm \ref{alg:greedyensemble} in Section \ref{sec:algorithm}. The preceding $^{*}$ denotes statistically significant difference ($\alpha = 0.05$) compared to the full model. Values in parenthesis are standard deviations across data splits.}
\label{tab:resultsensemblepnc}

\begin{tabular}{rccccc}

 & {\bf 4h} & {\bf 8h} & {\bf 12h} &  {\bf 12h--8h} & {\bf 24h--12h} \\
\hline
{\bf full model} & \phantom{$^*$}$70.80~(1.84)$ & \phantom{$^*$}$69.48~(1.56)$ & \phantom{$^*$}$67.99~(2.08)$ & \phantom{$^*$}$68.75~(1.85)$ & \phantom{$^*$}$67.13~(2.10)$\\
{\bf ensemble-u} &           $^{*}76.15~(2.66)$ &           $^{*}73.47~(2.62)$ & \phantom{$^*$}$70.51~(3.02)$ &           $^{*}72.12~(2.86)$ &           $^{*}70.77~(2.93)$ \\
{\bf ensemble-w} &           $^{*}78.13~(1.07)$ &           $^{*}74.61~(1.67)$ &           $^{*}70.67~(2.00)$ &           $^{*}72.77~(1.90)$ &           $^{*}70.63~(1.93)$ \\

\end{tabular}
\end{table}

\subsubsection{Privacy and Accuracy: Attacks and Defenses}

Differential privacy has become a privacy standard for privacy-preserving machine learning. However, there exist many forms of differential privacy with different theoretical privacy guarantees. \cite{JayaramanEvans:19} provide a detailed overview on various differentially private machine learning methods in practice. They empirically evaluate two types of privacy attacks, \textit{membership inference} and \textit{attribute inference}. 
We are mostly interested in the former attack, \textit{membership inference}, as it is most relevant for our method of ensembling patient-specific models.

We apply a simple but effective membership inference attack that has been described by \cite{YeomETAL:18}. Here, it is assumed that the attacker has access to the average training loss. While this is not the case in general, an attacker might obtain this single number by a security breach. The attacker could also estimate the average training loss if there exists some knowledge about the original training data distribution. 
To infer membership of a data point, the attacker feeds the data to the model and receives the corresponding prediction. Then, by comparing the prediction to the gold label, the attacker calculates the error (or loss) on this example. Finally, if the calculated loss of the example is smaller than the average training loss, then the example is considered to have been part of the training set. 

In our first experiment on membership attack, we compare the \textit{privacy leakage} \citep{JayaramanEvans:19} of the full model and the uniform ensemble at different privacy budgets $\epsilon$. The privacy leakage, also known as \textit{attacker's advantage} \citep{YeomETAL:18}, is defined as the difference of the true positive rate (TPR) and false positive rate (FPR) of a membership attack. For each evaluated privacy budget, we calculate FPR as the ratio of false positives from the unseen test set, and TPR as the ratio of true positives from the training set. To get additional error estimates, we keep the smaller group fixed and sample sets of an equal size 1,000 times from the larger group, i.e., we fix the test set (negatives) and sample from the training set (positives) 1,000 times to calculate our statistics. 
Figure \ref{fig:privacy_leakage} illustrates that the full model causes privacy leakage at $\epsilon > 10$ while the uniform ensemble is unaffected in terms of membership inference success on all evaluated levels ($10^{-3} \leq \epsilon \leq 10^3$). 

\begin{figure}[ht]
\floatconts
{fig:privacy_leakage} 
{\caption{Privacy leakage for membership inference attacks on the full model and on the uniform ensemble at different $\epsilon$-levels. Privacy leakage is defined as $\text{TPR}-\text{FPR}$, thus it can get values below 0. The full model shows privacy leakage at values $\epsilon > 10$ , while the ensemble preserves privacy at all evaluated $\epsilon$-levels. The vertical bars denote the 2$^{nd}$ and 3$^{rd}$ quartile and the median.}}
{
	\includegraphics[angle=-90,width=.59\textwidth]{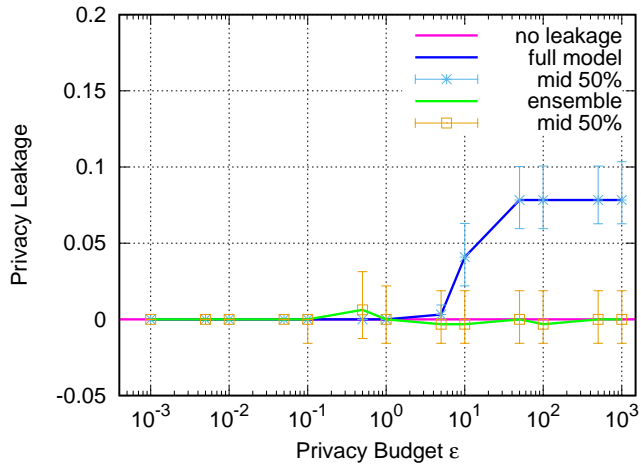}
}
  \end{figure}

In our second experiment, we empirically evaluate the tradeoff between privacy and accuracy which has been discussed in Section \ref{sec:dpensembling}. Privacy is achieved by injecting noise to the model's prediction. This, however, decreases accuracy, so it is important to know how much noise can be injected without sacrificing too much accuracy. We adopt the idea of accuracy loss by \cite{JayaramanEvans:19} and extend it to our AUROC metric. While accuracy typically ranges from 0 to 1, the AUROC ranges from 0.5 (random guessing) to 1.0 (every prediction correct). We thus employ an AUROC-adapted accuracy loss calculation:
\begin{equation*}
    \textit{Accuracy loss}_{AUROC} = 1 - \frac{(2\cdot\textit{AUROC of Private Model})-1}{(2\cdot\textit{AUROC of Non-Private Model})-1}
\end{equation*}
For each $\epsilon$-level and for each prediction interval, we calculate the accuracy loss for both the full model and the uniform ensemble. Throughout all intervals, our experiments indicate that the ensemble's accuracy degrades at privacy budget levels approximately two orders of magnitude smaller than the full model's accuracy. For smaller intervals (Figure \ref{fig:accuracy_loss} $a$,$b$,$c$) the variance between different data splits is high, while for wider prediction intervals (Figure \ref{fig:accuracy_loss} $d$,$e$) the variance is smaller, quantile markings are closer, and curves are smoother. 
\begin{figure}[!ht]
\floatconts
{fig:accuracy_loss}
{\caption{Accuracy loss of the full model and of the uniform ensemble at different $\epsilon$-levels and prediction times before Sepsis onset. Subfigures (\textit{a}), (\textit{b}), and (\textit{c}) show the accuracy loss for the 4h, 8h, and 12h prediction time, subfigures (\textit{d}) and (\textit{e}) for the 12-8h and the 12-24h prediction intervals, respectively. The vertical bars denote the 2$^{nd}$ and 3$^{rd}$ quartile and the median.}}
{%
\subfigure{%
\label{fig:pic1}
\includegraphics[angle=-90,width=.45\textwidth]{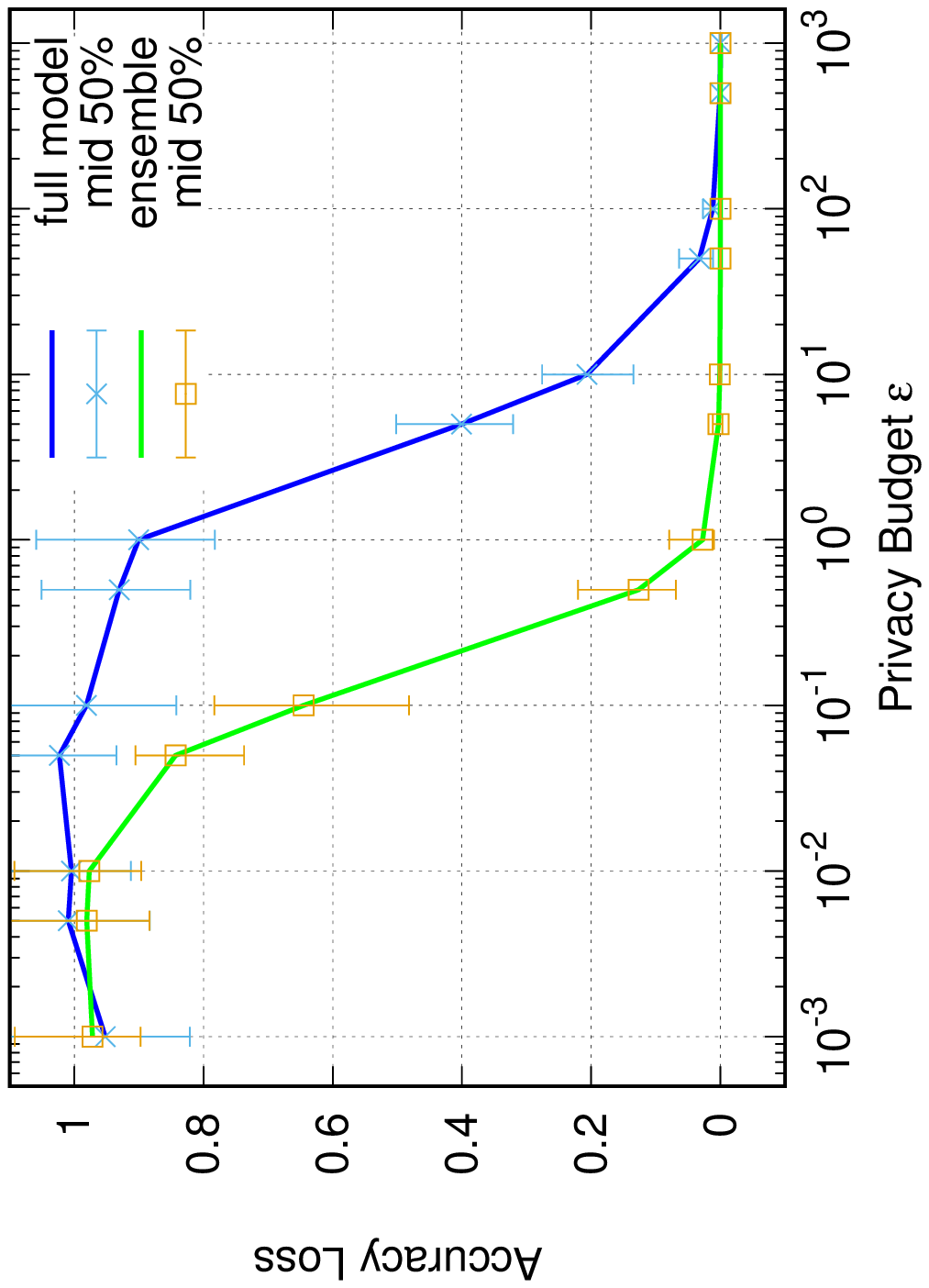}
}\qquad 
\subfigure{%
\label{fig:pic2}
\includegraphics[angle=-90,width=.45\textwidth]{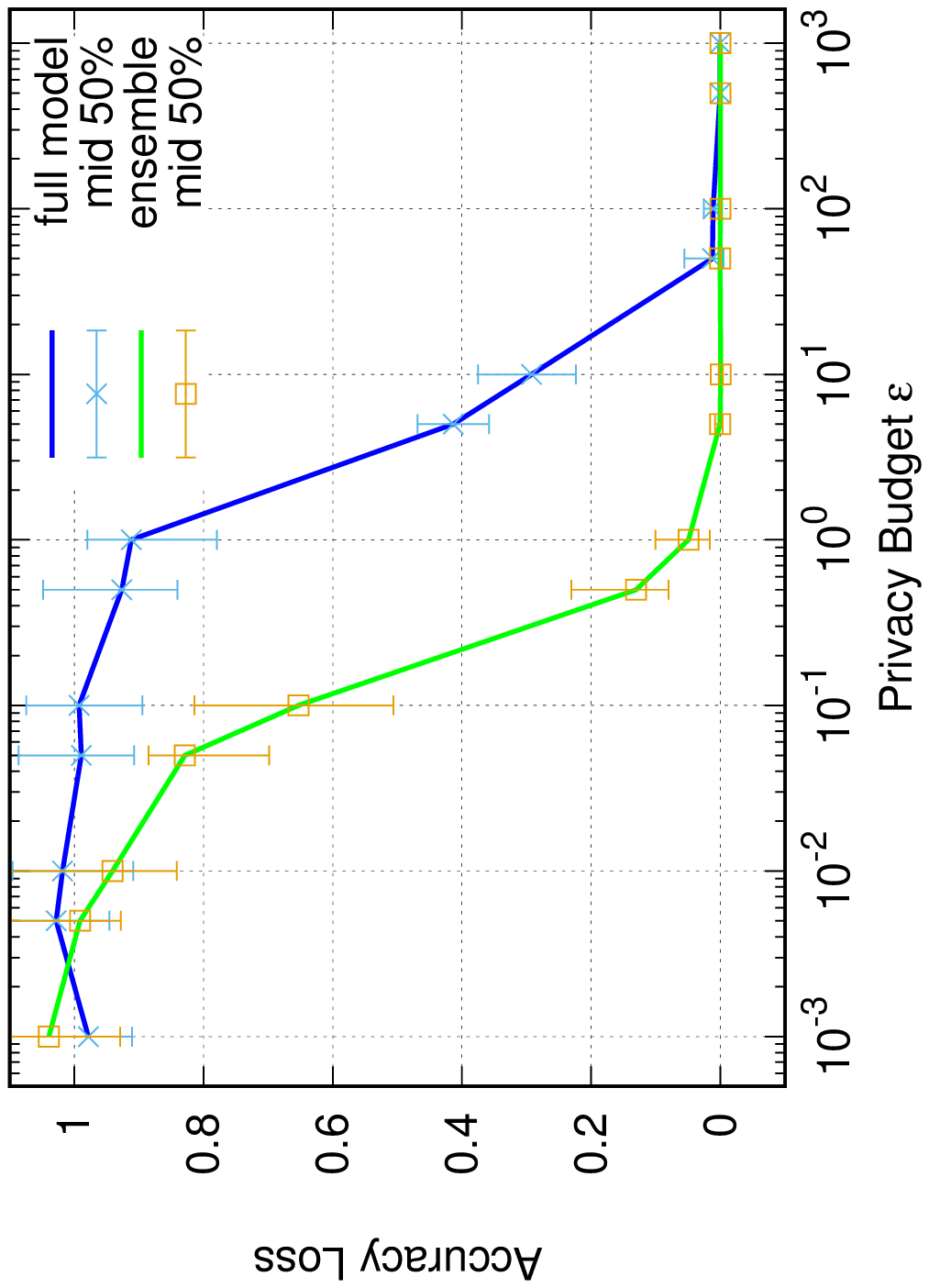}
}
\subfigure{%
\label{fig:pic3}
\includegraphics[angle=-90,width=.45\textwidth]{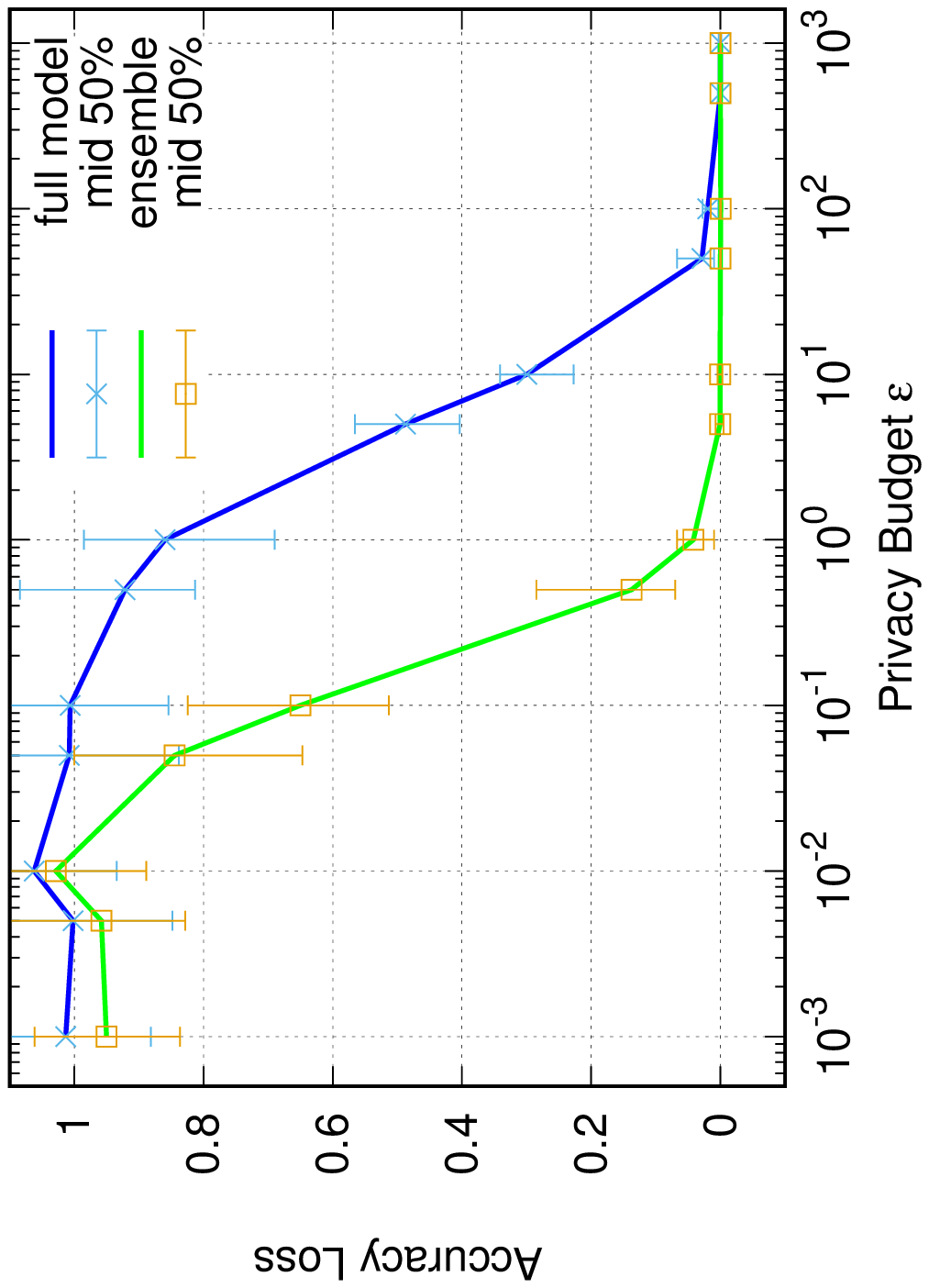}
}\qquad 
\subfigure{%
\label{fig:pic4}
\includegraphics[angle=-90,width=.45\textwidth]{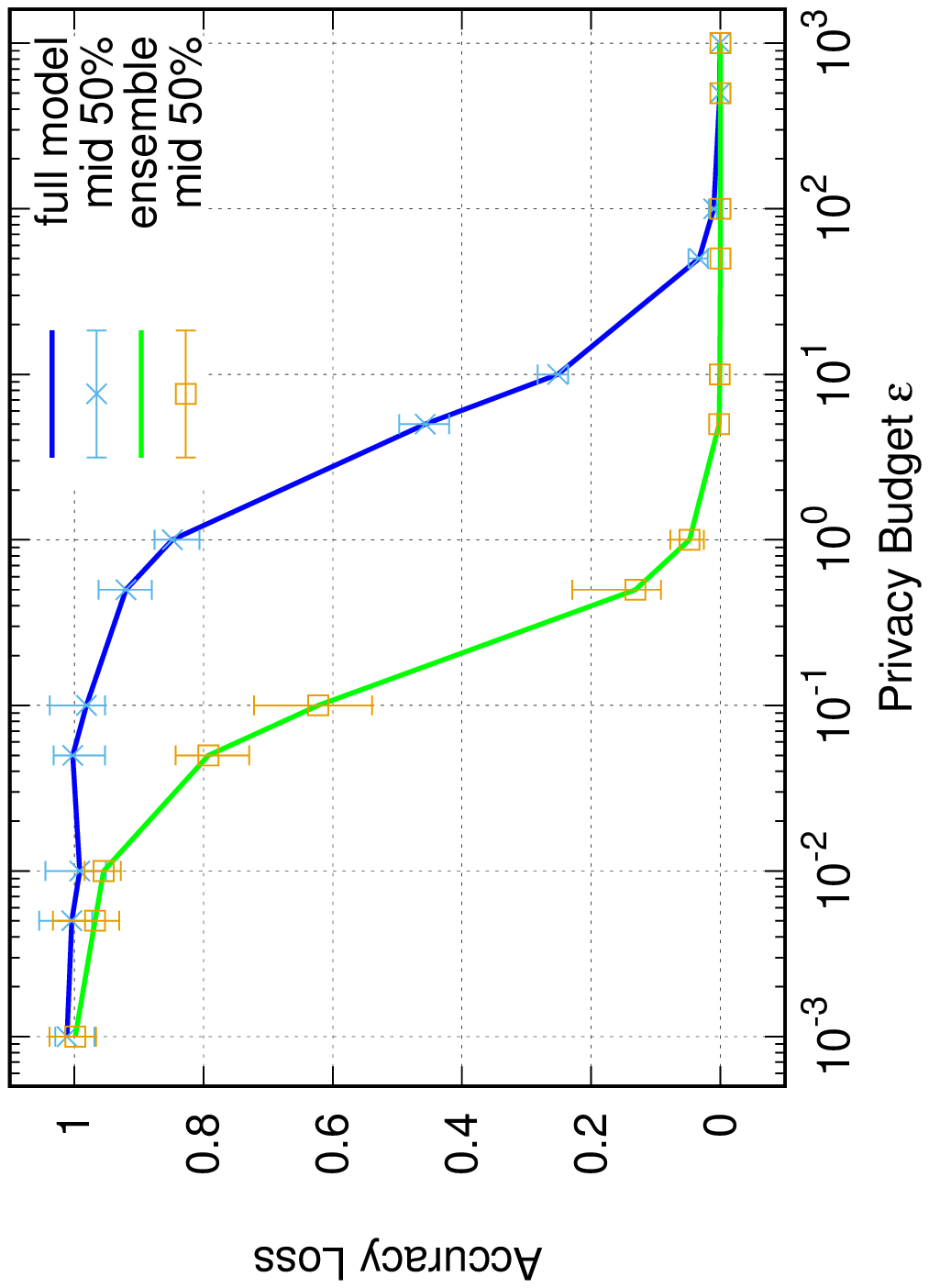}
}
\subfigure{%
\label{fig:pic5}
\includegraphics[angle=-90,width=.45\textwidth]{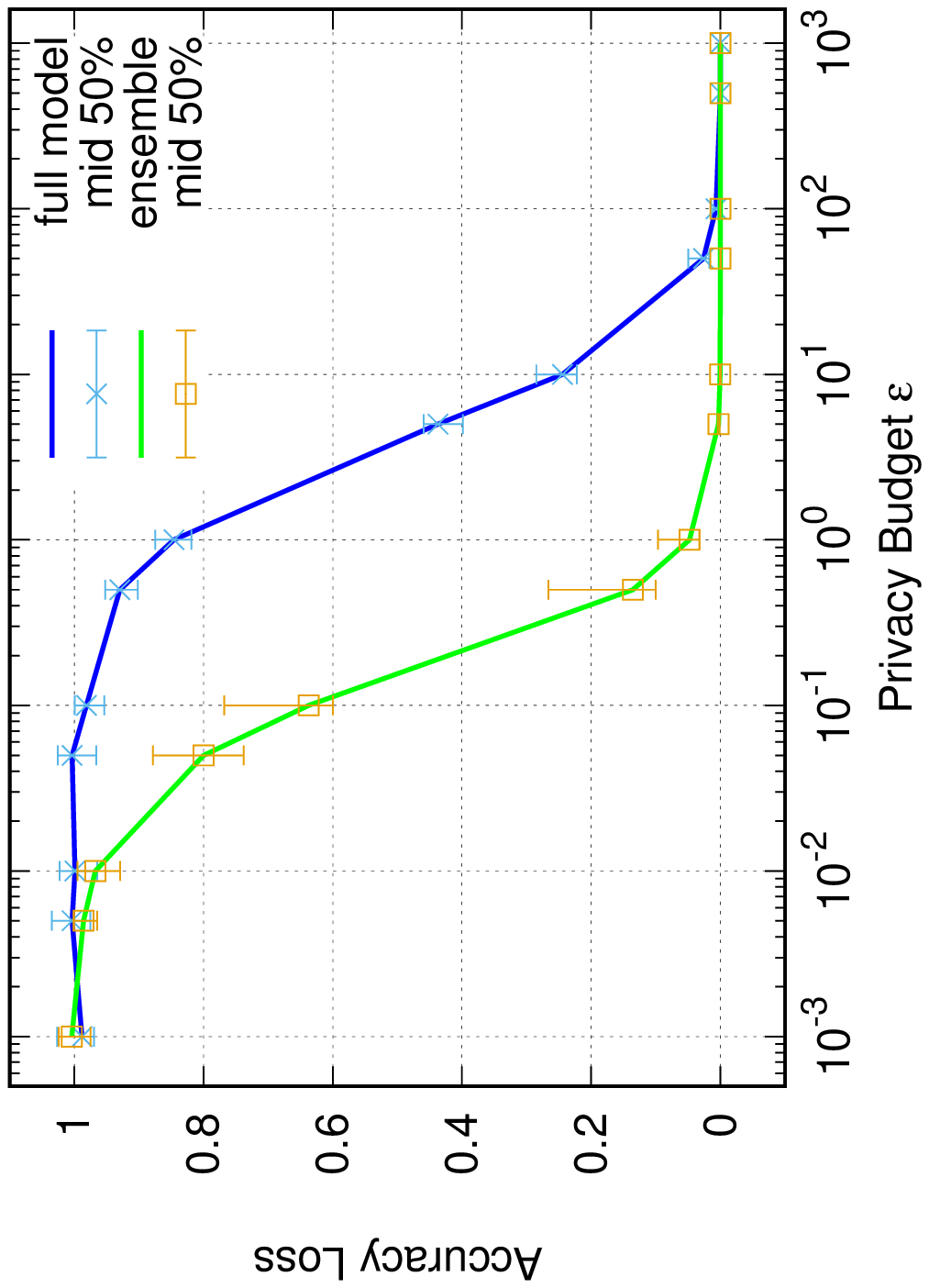}
}
}
\end{figure}

\section{Discussion and Conclusion}

We have shown how to train and combine individual personalized models in an ensemble that has highly desirable properties in the field of medical data analysis. We presented theoretical and empirical results that show that an ensemble of models which were trained on a fraction of the available data can be better than a baseline model that was trained on all data. We applied a strong membership attack and showed that the ensemble successfully prevents privacy leakage while maintaining its accuracy at privacy budgets almost two orders of magnitude smaller than a single fully trained baseline model.

An important type of attack that we did not explicitly evaluate is attribute inference. 
The techniques for attribute inference, however, are indeed based on membership inference. 
In attribute inference, the attacker constructs multiple variants of a candidate patient with and without the attribute in question. Then, similar to membership inference, the prediction error of the model tells the attacker which variant was used in the training dataset. 
Thus, models that successfully prevent membership inference are also able to prevent attribute inference. 

In the field of medical data analysis, ensembling is a promising method for prediction tasks. Explicit sharing of models across different medical communities is desirable and is performed for example in federated learning. However, these techniques are in practice difficult to implement due to various legal constraints and other circumstances such as incompatible interfaces. Protecting privacy in federated learning usually requires very careful injection of noise in the gradients or in the models to not sacrifice performance, which often results in high privacy costs \citep{JayaramanEvans:19}. 
Sharing models implicitly by ensembling and publicly sharing only outputs, where the models remain in a secured, inaccessible area, is a more realistic setup where privacy can be controlled efficiently. 

One limitation of our privacy preserving approach is that we still consume a small privacy budget for each query. Thus, given a fixed privacy budget, subsequently querying our ensemble will eventually use up all the privacy budget available. This problem is addressed by the  Private Aggregation of Teacher Ensembles (PATE) approach \citep{PapernotETAL:17}, where an ensemble of teacher models is used to annotate incomplete public data and train a student model on the annotated data. In doing so, the consumed privacy budget does not increase once the student model is trained. 
In principle, our ensemble strategy could be easily extended to a PATE-like scenario. 

In this work, we follow the standard mechanism for privacy protection by adding randomness at certain locations in the machine learning model. 
Another way of improving privacy could be to exploit the intrinsic randomness of ensembling algorithms such as subsample-and-aggregate \citep{NissimRS07,JordonYS19} or bagging \citep{LiuJG21}. 
An interesting direction for future research is an investigation of data augmentation techniques for privacy protection \citep{YunHCOYC19,LamSR22}.
In these approaches, data are randomly cut and recombined to generate artificial training data that have the potential to protect the privacy of the original data. 

The dataset we use throughout our experiments is quite imbalanced, i.e., the ratio between sepsis and non-sepsis patients is 1:3.3 (see Table \ref{tab:data_splits}). This is mainly due to our filtering where we remove a large amount of short term ICU non-sepsis patients. We found that the ensemble growing strategy we propose does handle imbalances at the ratio mentioned above very well. In reality, however, ICUs often observe much stronger imbalanced data and whether our method degrades at higher ratios or not is an open question. One aspect that implicitly reduces such imbalances is the fact that in general sepsis patients have longer hospital stay times than non-sepsis patients and thus provide more data points to the model.

The ensemble growing strategy described in Algorithm \ref{alg:greedyensemble} provably improves generalization of the ensemble, but as a greedy algorithm it cannot guarantee to return the optimal model combination. Other methods that jointly train an ensemble \citep{BuschjaegerETAL:20} or apply a non-greedy strategy \citep{ZhouETAL:02} might return better ensembles at the cost of increased complexity. Our simple growing strategy, however, means that our ensembles can be easily updated when new patients' data becomes available: a new model needs to be trained on the new data, and Algorithm \ref{alg:greedyensemble} will integrate the model in the ensemble if it performs well on the validation set and if it is reasonably different to the existing models.

Privacy protection is of increasing importance in the growing field of medical data science. Machine learning models highly benefit from increasing amounts of data, which can potentially compromise the patients' rights if techniques are applied without the privacy aspect in mind. There is a lot of active research in model sharing techniques such as federated learning, however, we demonstrate that output sharing such as ensembling is a simple and effective method to provide strong privacy guarantees without sacrificing performance. We don't propose that model sharing and output sharing should be mutual exclusive; at some levels, model sharing might be better applicable, for example in protected in-house scenarios. In other scenarios, where privacy is defined by differing regulations or laws, for example in a national or international context, output sharing might provide an avenue that is simpler to implement and at the same time provides very strong privacy guarantees.

\section{Acknowledgements}

This research has been conducted in project SCIDATOS (Scientific Computing for Improved Detection and Therapy of Sepsis), funded by the Klaus Tschira Foundation, Germany (Grant number 00.0277.2015).

\bibliography{references}

\appendix

\section*{Appendices}
\addcontentsline{toc}{section}{Appendices}
\renewcommand{\thesubsection}{\Alph{subsection}}

\subsection{Model Architecture and Meta-Parameters}
\label{app1}

The details of our general model architecture are as follows. Our model uses LSTM-cells \citep{HochreiterSchmidhuber:97} to model long and short dependencies in the time series data. 
Each patient's stay is divided into 24h windows with 12h of overlap. For example, a 48h stay will be divided into three 24h windows during training. The motivation for using overlapping windows here is that important changes in clinical measurements should not solely occur on one end of a window, but also in the middle of such a sequence so the model has access to more context. 
Our model has 2 hidden LSTM-layers with 200 units each. The input layer takes a 43-dimensional feature vector, the output is the severity score. We train the model with a minibatch size of 20 with mean squared error as the optimization metric. We also apply gradient clipping of 0.25 and set dropout to 0.2 for the hidden layers during training. 

\subsection{Optimality in Weighted Averaging Ensembling}
\label{app2}

\begin{lemma}
Let $\mathfrak{F}=\{\hat{f}_i : i=1,\ldots,N\}$ be a set of regression estimates with covariance matrix $\mathbf{C}=[C_{ij}]_{i,j = 1,\ldots,N}$. Then choosing the weights according to
\begin{align*}
    \mathbf{w}^\ast = \frac{\mathbf{C}^{-1} \mathbf{1}_N}{\mathbf{1}_N^\intercal \mathbf{C}^{-1} \mathbf{1}_N}
\end{align*}
where $\mathbf{1}_N$ is an $N$-vector whose components are all $1$ yields the ensemble estimator with the lowest possible MSE for $\mathfrak{F}$.
\end{lemma}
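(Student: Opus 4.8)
The plan is to recognize this as a linearly constrained quadratic minimization problem and solve it by the method of Lagrange multipliers. By the preceding lemma we already know that the quantity to be minimized is the quadratic form $\text{MSE}(\hat{f}_{em}) = \mathbf{w}^\intercal \mathbf{C} \mathbf{w}$, and the admissibility condition on the weights from the definition of the ensemble estimator is the single affine constraint $\mathbf{1}_N^\intercal \mathbf{w} = 1$. So the task reduces to
\begin{align*}
    \min_{\mathbf{w} \in \mathbb{R}^N} \ \mathbf{w}^\intercal \mathbf{C} \mathbf{w} \quad \text{subject to} \quad \mathbf{1}_N^\intercal \mathbf{w} = 1,
\end{align*}
and all the regression-specific content has been packed away into $\mathbf{C}$.

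First I would form the Lagrangian $L(\mathbf{w},\lambda) = \mathbf{w}^\intercal \mathbf{C}\mathbf{w} - \lambda(\mathbf{1}_N^\intercal \mathbf{w} - 1)$ and set its gradient in $\mathbf{w}$ to zero. Using symmetry of $\mathbf{C}$, the stationarity condition is $2\mathbf{C}\mathbf{w} = \lambda \mathbf{1}_N$, hence $\mathbf{w} = \tfrac{\lambda}{2}\mathbf{C}^{-1}\mathbf{1}_N$. Substituting this into the constraint $\mathbf{1}_N^\intercal \mathbf{w} = 1$ gives $\tfrac{\lambda}{2}\,\mathbf{1}_N^\intercal \mathbf{C}^{-1}\mathbf{1}_N = 1$, which determines the multiplier as $\tfrac{\lambda}{2} = (\mathbf{1}_N^\intercal \mathbf{C}^{-1}\mathbf{1}_N)^{-1}$. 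Back-substituting yields exactly the claimed closed form
\begin{align*}
    \mathbf{w}^\ast = \frac{\mathbf{C}^{-1}\mathbf{1}_N}{\mathbf{1}_N^\intercal \mathbf{C}^{-1}\mathbf{1}_N}.
\end{align*}

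To conclude that this stationary point is genuinely the global minimizer rather than merely a critical point, I would invoke convexity: since $\mathbf{C}$ is a covariance matrix it is positive semidefinite, and (as its inverse appears in the statement) we may take it to be positive definite, so the objective $\mathbf{w}^\intercal \mathbf{C}\mathbf{w}$ is strictly convex. Minimizing a strictly convex function over the nonempty affine set $\{\mathbf{w} : \mathbf{1}_N^\intercal \mathbf{w} = 1\}$ has a unique solution, which the first-order conditions must identify; positive definiteness simultaneously guarantees that the normalizing scalar $\mathbf{1}_N^\intercal \mathbf{C}^{-1}\mathbf{1}_N > 0$, so $\mathbf{w}^\ast$ is well defined. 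I expect the only real subtlety to be this positive-definiteness/invertibility hypothesis on $\mathbf{C}$: in the degenerate case where some component estimators are perfectly correlated (or one is an exact affine combination of others), $\mathbf{C}$ is singular and the formula must be read with a pseudoinverse or after pruning redundant models. I would therefore state invertibility of $\mathbf{C}$ as a standing assumption, which both makes the formula meaningful and supplies the convexity needed for optimality. The promised corollary $\text{MSE}(\hat{f}_{em}) \leq \min_j \text{MSE}(\hat{f}_j)$ then follows by comparing the optimal value against the feasible choice $\mathbf{w} = \mathbf{e}_j$.
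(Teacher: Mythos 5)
Your proof is correct and is exactly the standard argument this result rests on: the paper itself states the lemma without proof (deferring to Perrone and Cooper, 1992), and the Lagrange-multiplier minimization of $\mathbf{w}^\intercal \mathbf{C} \mathbf{w}$ over the affine constraint $\mathbf{1}_N^\intercal \mathbf{w} = 1$, with positive definiteness of $\mathbf{C}$ supplying both well-definedness and global optimality, is precisely that argument. Your observation that invertibility of $\mathbf{C}$ must be assumed (and fails when component models are linearly dependent) is also consistent with the paper's own remark in its summary that the invertibility and conditioning of $\mathbf{C}$ are of direct practical importance.
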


\begin{note}
If $\mathbf{C}$ is a diagonal matrix then the optimal weights are
\begin{align*}
    w^{\ast}_i = \frac{\frac{1}{\text{MSE}(\hat{f}_i)}}{\sum_{j=1}^N \frac{1}{\text{MSE}(\hat{f}_j)}}  \text{ .}
\end{align*}
If further $\text{MSE}(\hat{f}_1) = \ldots =\text{MSE}(\hat{f}_N)$ the optimal weights are
\begin{align*}
    w^{\ast}_i = \frac{1}{N} \text{ .}
\end{align*}
\end{note}
An important consequence of this Lemma is the fact that if the weights are chosen in an optimal way, then $\text{MSE}(\hat{f}_{em})  \leq \min_{j=1,\ldots,N} \text{MSE}(\hat{f}_{j})$.

\vspace{.5em}
\noindent{}Summary:
 \begin{enumerate}
     \item The MSE of an ensemble estimator of a collection $\mathfrak{F}$ is completely determined by the covariance matrix $\mathbf{C}$. 
     \item In practice $\mathbf{C}$ is unknown and must be replaced by an estimator $\hat{\mathbf{C}}$. In the usual setting where training, test and validation data are drawn from the same distribution the covariances $C_{ij}$ can be estimated by validation set sample means.
     \item Regarding the optimal weights, the invertibility of $\mathbf{C}$ is of direct importance. Given that in practice we need to base our calculation on $\hat{\mathbf{C}}$ which should aim at choosing $\mathfrak{F}$ in such a way that the inversion of $\mathbf{C}$ (and in consequence  $\hat{\mathbf{C}}$) is numerically stable and well conditioned. One way to achieve this is to choose $\mathfrak{F}$ in such a way that $\mathbf{C}$ is a diagonal matrix.
 \end{enumerate}

\subsection{Differential Privacy} 
\label{app3}

Differential privacy \citep{Dwork:06,DworkRoth:14} is based on guarantees that a randomized algorithm behaves similarly on similar input databases, i.e., on databases differing in one data point. Let $D, D'\in \mathcal{D}$ be two data sets that are obtained from one another by removing one data point, called neighboring data sets, and denoted by $D \sim D'$. Furthermore, let $\mathcal{A}$ be a randomized algorithm producing an output in the space $\mathcal{O}$  on input data in $\mathcal{D}$.

\begin{definition}
A randomized algorithm $\mathcal{A}$ is $(\epsilon,\delta)$-differentially private if for all data sets $D, D'\in \mathcal{D}$, and all subsets of outcomes $S \subseteq \mathcal{O}$,
\[
Pr[\mathcal{A}(D) \in S] \leq e^{\epsilon} Pr[\mathcal{A}(D') \in S] + \delta,
\]
where $\epsilon$ is the privacy budget, and $\delta$ is the failure probability.
\end{definition} 

The concept of differential privacy implies that the level of protection of data is always lowered when a model trained on this data is queried. The  privacy budget $\epsilon$ is thus reduced by each model query. A common way to lower this reduction is to add noise to the model's answer of the query. 

Let us consider deterministic functions $f: \mathcal{D} \rightarrow \mathbb{R}^d$ as fundamental types of database queries. The amount of noise that is required to preserve privacy of a function $f$ is proportional to its sensitivity, i.e., to the maximum change in the output of $f$ over all possible inputs:
\begin{definition}
The $\ell_1$-sensitivity of a function $f: \mathcal{D} \rightarrow \mathbb{R}^d$ is 
\[\Delta f = \max_{D \sim D'} \norm{f(D) - f(D')}_1.\]
\end{definition}

A standard technique to achieve differential privacy is the Laplace mechanism that perturbs each coordinate of a function with noise drawn from a Laplace distribution, with variance proportional to the sensitivity of the function (divided by $\epsilon$):
\begin{definition}
Given a function $f: \mathcal{D} \rightarrow \mathbb{R}^d$, the Laplace mechanism is defined as $f(D) + (W_1, \ldots, W_d)$ where $W_i$ are i.i.d. random variables drawn from $\mathrm{Lap}(\Delta{f}/\epsilon)$, and $\mathrm{Lap}(s)$ is a Laplace distribution with mean $0$ and variance $2s^2$.
\end{definition}

\subsection{Features for Sepsis Prediction Task}
\label{app4}
\begin{table}[ht]
\centering
\caption{
List of the 43 features we used for training our models for the Sepsis prediction task. Features can be readings from vital monitors (e.g., heart rate, blood pressure), lab results (e.g., bilirubin, creatinine), or static demographic features (age). 
}
\label{tab:features}
\begin{tabular}{llll}

\hline
 Age        & Arterial pH    & Urine output       & Procalcitonin (PCT)    \\
 Heart rate & Leukocytes     & Blood glucose      & $\Delta$ Temperature    \\
 Lactate    & Bicarbonate    & Stroke volume      & Alanine transaminase   \\
 Creatinine & Base excess    & Horowitz index     & BUN/Creatinine ratio   \\
 Bilirubin  & Lymphocytes    & Partial CO$_2$     & Aspartate transaminase    \\
 Sodium     & Net balance    & Respiratory rate   & Oxygenation saturation \\
 Potassium  & Quick score    & Calcium (ionized)  & C-reactive protein (CRP)\\
 Hemoglobin & Systolic BP    & Heart time volume  & Respiratory minute volume\\
 Chloride   & Temperature    & Oxygen saturation  & Fraction of inspired O$_2$\\
 SVRI       & Diastolic BP   & Pancreatic lipase  & Partial pressure art. O$_2$\\
 Mean BP    & Thrombocytes   & Blood urea nitrogen& \\
\hline

\end{tabular}

\end{table}

\end{document}